\documentclass[letterpaper, 10 pt, conference]{ieeeconf}  

\IEEEoverridecommandlockouts                              

\usepackage{booktabs}
\usepackage{multirow}
\usepackage{graphicx}
\usepackage{placeins}
\usepackage{amsmath} 
\usepackage{amssymb}  
\usepackage{url}
\usepackage{cite}
\usepackage{dblfloatfix}

\graphicspath{{\subfix{../images/}}}

\makeatletter
\let\NAT@parse\undefined
\makeatother

\usepackage{hyperref}
\usepackage{siunitx}
\usepackage{subfiles}

\hypersetup{hidelinks}
\newtheorem{rem}{Remark}

\newtheorem{prop}{Proposition}

\newtheorem{assum}{Assumption}

\newcommand\tran{\mkern-2mu\raise1.25ex\hbox{$\scriptscriptstyle\top\hspace{0.5mm}$}\mkern-3.5mu}
\newcommand{\R}{\mathbb{R}}

\newcommand{\C}{\mathcal{C}}

\newcommand{\Z}{\mathcal{Z}}
\newcommand{\bm}[1]{{\boldsymbol{#1}}}

\title{\LARGE \bf
Structure-Aware Spatial Kernels for Gaussian Process Distributed Port-Hamiltonian Systems
}

\author{Tianyi Yang$^{1}$ and Thomas Beckers$^{2}$  
\thanks{$^{1}$ is with the Department of Computer Science, Vanderbilt University, Nashville, TN 37212, USA {\tt\small tianyi.yang@Vanderbilt.edu}} %
\thanks{$^{2}$ is with the Institute for Measurement, Control and Microtechnology, Ulm University, 89081 Ulm, Germany {\tt\small thomas.beckers@uni-ulm.de}}
}

\begin{document}

\maketitle
\thispagestyle{empty}
\pagestyle{empty}

\begin{abstract}

Gaussian process distributed port-Hamiltonian systems (GP-dPHS) enable data-driven modeling of distributed dynamical systems while preserving their physical structure. However, standard stationary kernels can be inefficient for spatially heterogeneous systems because they neither exploit known input invariances nor reflect structured spatial dependencies. We address this limitation with a structure-aware kernel design that removes irrelevant state coordinates and decouples state and spatial dependencies through a tensor-product covariance. This modular construction allows qualitative knowledge of spatial heterogeneity to be incorporated by selecting an appropriate spatial kernel without requiring an analytical description of the underlying profile. Numerical experiments on a nonlinear string with periodic, smoothly varying, and semi-piecewise stiffness profiles demonstrate that structural decoupling consistently improves reconstruction accuracy, while profile-matched spatial kernels yield substantially larger gains in force-field and trajectory reconstruction. 

\end{abstract}

\section{Introduction} \label{sec:introduction}
Data-driven modeling has become increasingly important in control, robotics, and physical-system identification, particularly when first-principles models are incomplete or computationally expensive~\cite{dataDrivenControl}.
In these applications, learned models should not only be able to predict trajectories but also remain consistent with the physical structure of the underlying system, as this consistency supports trustworthy predictions and reliable use in control of safety-critical systems.
Physics-informed machine learning addresses this requirement by incorporating prior knowledge of conservation laws, constitutive relations, and system interconnections into the learning process~\cite{PINN}.

Recently, port-Hamiltonian systems (PHS) have been increasingly used as an energy-based modeling approach for nonlinear system identification~\cite{PHSIntro}. They provide a natural framework for structure-preserving modeling in control by representing dynamics in terms of energy storage, interconnection, dissipation, and external ports.
This framework has motivated several data-driven modeling approaches, including port-Hamiltonian neural networks that learn energy functions, dissipation, and external forcing, and operator-inference methods that identify structure-preserving reduced-order operators from data and a prescribed Hamiltonian form~\cite{desai2021phnn, geng2025phopinf}.
Within this broader class of approaches, Gaussian process distributed port-Hamiltonian systems (GP-dPHS) combine the port-Hamiltonian representation for partial differential equations (PDEs) with nonparametric probabilistic learning, enabling unknown energy-related quantities to be inferred from data while retaining an interpretable physical structure~\cite{GPPHSPaper, tan24a}.
This Bayesian formulation provides predictive uncertainty estimates, while the distributed port-Hamiltonian representation preserves the compositional structure of the physical system. 

Existing GP-dPHS formulations typically employ squared-exponential (RBF) kernels to learn the Hamiltonian or its associated dynamics~\cite{tan24a}.
Although such kernels provide flexible general-purpose priors, their stationary, distance-based similarity structure can be restrictive in spatial modeling problems whose dependence patterns vary across the domain~\cite{stationaryKernelLimitation}.

This limitation is particularly relevant for spatially heterogeneous continuum systems, which are ubiquitous in modern engineering and biological applications.
For instance, soft robotic manipulators may employ periodically spaced compliant sections to promote prescribed bending modes.
Additionally, piecewise stiffness profiles naturally emerge in multi-material composite structures, while $C^2$-smooth profiles provide an idealized representation of smoothly graded materials~\cite{FGMexample} and heterogeneous biological tissues~\cite{bioTissueEg}. In such systems, spatial locations can exhibit fundamentally different or recurring constitutive behavior despite belonging to the same physical domain. Consequently, an effective learning model should capture these spatial relationships rather than relying solely on geometric proximity to determine similarity.

The objective of this paper is to improve GP-dPHS modeling of spatially heterogeneous systems by incorporating available structural knowledge into the covariance design. In particular, we consider both known physical invariances and coarse information about spatial dependence.
To illustrate the limitations of standard kernel choices, we consider a one-dimensional string with spatially varying stiffness, constant damping, and localized motor forcing. While a stationary squared-exponential kernel provides nearly exact reconstruction for spatially uniform stiffness, its performance deteriorates substantially for heterogeneous profiles. In contrast, incorporating spatial priors aligned with the qualitative stiffness structure substantially improves reconstruction performance.

\textbf{Contribution:}
This paper proposes a systematic, modular framework for incorporating known physical invariances and qualitative spatial knowledge into GP-dPHS kernel design. The framework removes irrelevant state coordinates from individual effort models and decouples the remaining state and spatial dependencies through a tensor-product covariance. This separation provides a systematic interface for selecting spatial kernels according to coarse structural information, such as periodicity, finite smoothness, or transitions between distinct spatial regimes, without requiring an analytical description of the underlying heterogeneous profile. In this way, prior knowledge is encoded directly in the covariance while preserving the underlying port-Hamiltonian structure. The approach is demonstrated on a nonlinear string with heterogeneous stiffness, where profile-matched spatial kernels substantially improve force-field and trajectory reconstruction over a stationary RBF baseline.

The remainder of this paper is organized as follows. Section~\ref{sec:background_problem_formulation} introduces the GP-dPHS framework and learning problem. Section~\ref{sec:kernel} presents the proposed structure-aware kernel design, and Section~\ref{sec:string_system_and_result} provides its numerical evaluation on a nonlinear string system.

\section{Background and Problem Formulation} \label{sec:background_problem_formulation}
In this section, we briefly introduce distributed Port-Hamiltonian systems and Gaussian Process distributed Port-Hamiltonian systems (GP-dPHS).
The latter provides a Bayesian framework for learning partially unknown PDE dynamics while preserving the underlying physical structure.

\subsection{Distributed Port-Hamiltonian Systems}
\label{subsec:dphs}
Distributed Port-Hamiltonian systems (dPHS) provide a systematic framework for representing physical systems governed by partial differential equations.
In contrast to finite-dimensional Port-Hamiltonian systems, the interconnection, dissipation, and input operators can contain spatial differential operators.
The dynamics are characterized by a Hamiltonian functional representing the total energy of the distributed system.

Let\footnote{Vectors~$\bm a$ and vector-valued functions~$\bm f(\cdot)$ are denoted with bold characters. Matrices are described with capital letters. $I_n$ is the $n$-dimensional identity matrix and $0_n$ the zero matrix. The expression~$A_{:,i}$ denotes the i-th column of $A$. For a positive definite matrix $\Lambda$, the associated squared norm is $\|\bm{x} - \bm{y}\|_{\Lambda}^2 = (\bm{x} - \bm{y})^\top \Lambda (\bm{x} - \bm{y})$.  $\R_{>0}$ denotes the set of positive real numbers, while $\R_{\geq 0}$ is the set of non-negative real numbers. $\C^i$ denotes the class of $i$ times continuously differentiable functions.} $\mathcal{Z}\subset\mathbb{R}^{d}$ denote a compact spatial domain, and let $\bm{x}(t,\bm{z})\in\mathbb{R}^{n}$ denote the state at time $t$ and position $\bm{z}$.
For each fixed $t$, the distributed state $\bm{x}(t,\cdot)$ belongs to a suitable space $\mathcal{X}$ of vector-valued functions on $\mathcal{Z}$.
The Hamiltonian functional $\mathcal{H} \colon \mathcal{X} \to \mathbb{R}$ is defined as
\begin{equation}
    \mathcal{H}[\bm{x}(t,\cdot)] = \int_{\mathcal{Z}} H(\bm{z}, \bm{x}(t,\bm{z})) \,\mathrm{d}\bm{z},
    \label{eq:hamiltonian_functional}
\end{equation}
where $H \colon \mathcal{Z}\times\mathbb{R}^{n}\to\mathbb{R}$ denotes the local Hamiltonian density.
The corresponding variational derivative is denoted by $\delta_x\mathcal{H}$.
Following the general dPHS formulation, the dynamics can be written as
\begin{equation}
    \frac{\partial \bm{x}}{\partial t} = \left(\mathcal{J}-\mathcal{R}\right) \delta_x\mathcal{H} + \mathcal{G}\bm{u},
    \label{eq:dphs}
\end{equation}
with output $\bm{y}\in\R^m$ defined by $\bm{y} = \mathcal{G}^{*}\delta_x\mathcal{H}$.
Here, $\mathcal{J}$ is a skew-adjoint differential operator describing the power-conserving interconnection structure, with its boundary terms accounting for boundary power exchange.
The dissipation operator $\mathcal{R}$ is self-adjoint and positive semidefinite on its specified domain, while $\mathcal{G}$ describes coupling to the external input $\bm{u}$.
The operator domains and boundary conditions are chosen consistently with the associated energy balance.

The Port-Hamiltonian representation separates the physical interconnection structure from the Hamiltonian.
In particular, $\mathcal{J}$ and $\mathcal{R}$ determine how energy is exchanged and dissipated, whereas $\mathcal{H}$ characterizes the constitutive behavior of the system.
This separation is particularly useful for learning physical systems since complicated and potentially unknown nonlinearities can be represented through the Hamiltonian while retaining the known physical structure of the PDE.
A more detailed introduction can be found in~\cite{rashad2020twenty}.

\subsection{Gaussian Process Distributed Port-Hamiltonian Systems} \label{subsec:gpdphs}
For many physical PDE systems, the structure of the differential operators in~\eqref{eq:dphs} can be derived from first principles, whereas the Hamiltonian functional is only partially known.
This situation typically arises in highly nonlinear systems for which structural knowledge is available, but the exact governing equations are difficult to obtain.
An example is soft robotic systems, where complex and hard-to-characterize material behavior can give rise to unknown nonlinear dynamics.
Gaussian Process distributed Port-Hamiltonian systems (GP-dPHS) address this problem by learning the unknown Hamiltonian from data while preserving the dPHS structure.
We provide a brief introduction here, and more details can be found in~\cite{tan24a}.

Consider observations of the distributed system at temporal instants $t_i\geq 0$ and spatial locations $\bm{z}_j\in\Z$,
\begin{equation}
    \mathcal{D} = \left\{ t_i, \bm{z}_j, \bm{x}(t_i,\bm{z}_j), \bm{u}(t_i) \right\}_{i=1,j=1}^{N_t,N_z}.
    \label{eq:gpdphs_dataset}
\end{equation}
The observations can first be used to estimate the state derivatives and, if necessary, to obtain a denser spatial representation of the state.
This results in an augmented training dataset containing states and their corresponding time derivatives, $\mathcal{E} = \left\{X,\dot{X}\right\} $, where $X$ denotes the spatially stacked states and $\dot{X}$ their estimated temporal derivatives.

The central idea of GP-dPHS is to place a Gaussian Process prior on the unknown Hamiltonian.
Since Gaussian Processes are closed under linear transformations~\cite{jidling2017linearly}, the differential operator defining the dPHS dynamics can be applied directly to the GP.
Consequently, the dynamics in~\eqref{eq:dphs} admit the probabilistic representation
\begin{equation}
    \frac{\partial \bm{x}}{\partial t} \sim \mathcal{GP}\!\big(\mathcal{G}\bm{u},\, k_{\mathrm{dPHS}}(\bm{x}, \bm{x}') \big).
    \label{eq:gpdphs_gp}
\end{equation}
Here, $k_{\mathrm{dPHS}}$ is a PHS-constrained covariance function, and the operators may depend on a finite set of unknown physical parameters $\Theta$ in the general GP-dPHS formulation.
In the present study, the physical operators and their parameters are treated as known, as specified in Assumption~\ref{ass:known_structure}.

Starting from a squared-exponential covariance function for the Hamiltonian
\begin{equation}
    k_H(\bm{x},\bm{x}') = \sigma_f^2 \exp\!\left( -\frac{\|\bm{x}-\bm{x}'\|^2}{2\phi_l^2} \right),
    \label{eq:hamiltonian_kernel}
\end{equation}
the PHS-constrained covariance function $k_{\mathrm{dPHS}}$ is obtained by applying the Port-Hamiltonian operators to the covariance of the Hamiltonian~\eqref{eq:hamiltonian_kernel}, resulting in
\begin{equation}
    \begin{split}
        k_{\mathrm{dPHS}}(\bm{x},\bm{x}') &= \left(\mathcal{J} -\mathcal{R}\right) \delta_x k_H(\bm{x},\bm{x}') \delta_{x'}^{\top} \left(\mathcal{J} -\mathcal{R}\right)^{\top}.
    \end{split}
    \label{eq:gpdphs_kernel}
\end{equation}
Thus, the physical structure is incorporated directly into the covariance function rather than imposed after learning.
The potentially unknown kernel hyperparameters $\phi_l$ and $\sigma_f$ can be optimized by minimizing the negative log marginal likelihood.

After conditioning on the observations $\mathcal{E}$, the learned dynamics can equivalently be written in distributed Port-Hamiltonian form as
\begin{equation}
    \frac{\partial \hat{\bm{x}}}{\partial t} = \left( \mathcal{J} - \mathcal{R} \right) \delta_x \widehat{\mathcal{H}} + \mathcal{G}\bm{u},
    \label{eq:gpdphs_dynamics}
\end{equation}
where $\widehat{\mathcal{H}}$ is endowed with the GP posterior distribution.
Hence, the GP-dPHS does not represent only a single learned vector field, but a distribution over distributed Port-Hamiltonian models.
An important consequence of this construction is that a sample $\widehat{\mathcal{H}}$ from the GP posterior is a deterministic Hamiltonian functional.
Substituting such a sample into~\eqref{eq:gpdphs_dynamics} therefore yields a dPHS with the prescribed interconnection and dissipation structure.
As a result, the sampled models retain the physical energy structure of the underlying dPHS while the GP posterior provides a quantitative description of the remaining model uncertainty.

\subsection{Problem Setting}
\label{subsec:heterogeneous_problem}
We consider the problem of learning partially unknown constitutive relations of a \emph{spatially heterogeneous} PDE system whose dynamics admit the distributed Port-Hamiltonian representation~\eqref{eq:dphs} via GP-dPHS.
Spatial heterogeneity may arise, for example, in soft robotic manipulators with geometric variations or composed of multiple materials with different stiffnesses.
Throughout the paper, we assume that the PDE system~\eqref{eq:dphs}, with prescribed initial and boundary conditions, has a unique solution on the time interval considered, with sufficient temporal and spatial regularity for the derivatives used in the formulation.
The Hamiltonian functional $\mathcal{H}$ is assumed to be unknown due to, for example, nonlinear stress–strain relations.
We also assume $\mathcal{H}$ to be infinitely Fr\'echet differentiable with respect to the state on an appropriate state-function space.
The objective is to obtain predictions $\hat h_i\colon\R^n\times\mathcal{Z}\to\R$ such that the learned model
\begin{align}
    &\frac{\partial \hat{\bm{x}}}{\partial t} = \left(\mathcal{J}-\mathcal{R}\right) \hat{\bm{h}}(\hat{\bm{x}}, \bm{z}) + \mathcal{G}\bm{u}, \quad \hat{\bm{h}} = \begin{bmatrix} \hat h_1 & \cdots & \hat h_n \end{bmatrix}^{\top},\notag \\
    &\text{with } \hat h_i \approx h_i(\bm{x}, \bm{z}) := \frac{\delta \mathcal{H}}{\delta x_i}(\bm{x}, \bm{z}), \quad i = 1, \ldots, n.
    \label{eq:heterogeneous_learned_dphs}
\end{align}
closely approximates the actual system dynamics in~\eqref{eq:dphs}.
Building on GP-dPHS in~\cite{tan24a}, we especially focus on using coarse prior information about spatially varying constitutive properties to construct structure-aware covariance functions.

\begin{rem}
In particular, we do not assume that an analytical description of the spatially varying constitutive law with respect to $\bm{z}$ is available.
Instead, we have access to coarse structural information about its spatial dependence.
For example, this information may indicate that the constitutive properties vary periodically across the spatial domain, possess finite regularity, or transition between distinct spatial regimes as in a soft robot with varying geometry or materials.
\end{rem}

\subsection{Assumptions}
To formalize the learning problem, we introduce the following assumptions.

\begin{assum}
\label{ass:known_structure}
The differential operators $\mathcal{J}$, $\mathcal{R}$, and $\mathcal{G}$ defining the dPHS structure~(\ref{eq:dphs}) are known.
\end{assum}

\begin{assum}
\label{ass:observations}
The state of the PDE system~\eqref{eq:dphs} can be observed at temporal instants $t_i$ and spatial locations $\bm{z}_j$, yielding the set of observations
$
    \left\{ \bm{x}(t_1,\bm{z}_1),\!\ldots\!, \bm{x}(t_1,\bm{z}_{N_z}),\! \ldots\!, \bm{x}(t_{N_t},\bm{z}_1),\ldots, \bm{x}(t_{N_t},\bm{z}_{N_z}) \right\}\!.
$
\end{assum}

\begin{assum}
\label{ass:separable_hamiltonian}
The true Hamiltonian density is separable with respect to the scalar state components, taking the form $H(\bm{z}, \bm{x}) = \sum_{i=1}^{n} H_i(\bm{z}, x_i)$.
Consequently, each effort component $h_i$ depends only on $x_i$ and $\bm{z}$ and is independent of $x_j$ for $j \neq i$.
\end{assum}

\begin{assum}
\label{ass:heterogeneous_structure}
Coarse prior information about the spatial dependence of the unknown effort components $h_i(x_i, \bm{z})$ is available.
\end{assum}
Assumption~\ref{ass:known_structure} is only mildly restrictive since $\mathcal{J}$ and $\mathcal{G}$ are typically matrix differential operators whose structure is determined by prior knowledge of the PDE system to be learned.
However, this assumption can be further relaxed by treating unknown physical parameters as hyperparameters during the learning process, see~\cite{tan24a} for details.
Assumption~\ref{ass:observations} ensures that data can be collected from the PDE system and therefore requires observability of the state.
If the state cannot be directly observed, an appropriate observer must be employed.
Assumption~\ref{ass:separable_hamiltonian} specifies the state invariances used in the covariance design, while Assumption~\ref{ass:heterogeneous_structure} provides qualitative information for selecting the spatial covariance.
\begin{rem}
Assumption~\ref{ass:separable_hamiltonian} can be weakened by allowing separability between groups of state variables rather than between individual scalar components.
Let $\{I_b\}_{b=1}^{B}$ be a partition of the state indices and consider a Hamiltonian density of the form $H(\bm{z}, \bm{x}) = \sum_{b=1}^{B} H_b(\bm{z}, \bm{x}_{I_b})$.
For $i \in I_b$, the effort component $h_i$ may depend on all state variables in $\bm{x}_{I_b}$ but remains independent of variables outside that group.
For example, a kinetic--potential decomposition $H(\bm{z}, \bm{p}, \bm{q}) = T(\bm{z}, \bm{p}) + V(\bm{z}, \bm{q})$ preserves independence between momentum and configuration variables without requiring separability within either group.
However, independence between groups does not guarantee integrability among independently learned effort components within the same group and additional compatibility constraints are required to ensure that.
\end{rem}

\section{Structured Kernel Design for GP-dPHS} \label{sec:kernel}
Before we present the proposed structure-aware kernel in Subsection~\ref{sec:structure_aware_kernel}, we first show the trivial extension of the GP-dPHS model and the resulting deficits.

\subsection{Trivial Extension}
\label{subsec:baseline_model}
For a system with heterogeneous spatial structure, a straightforward modeling approach is to extend the input space of the GP-dPHS kernel~\eqref{eq:gpdphs_kernel} to include the spatial variable $\bm{z}$ associated with that structure.
Thus, the input is extended to $\boldsymbol{\xi} = [\bm{x}^\top, \bm{z}^\top]^\top$ and used in the squared-exponential Hamiltonian covariance in \eqref{eq:hamiltonian_kernel} as the augmented input space.
This covariance is assigned directly to each independently trained effort model.
By assigning a separate lengthscale to each coordinate, the trivial extended PHS kernel is given by
\begin{align}
    k_{\mathrm{triv}}(\boldsymbol{\xi}, \boldsymbol{\xi}') = \sigma_f^2 \exp\left[-\frac{1}{2} \sum_{j=1}^{n+d} \frac{(\xi_j - \xi_j')^2}{\ell_j^2}\right]. \label{for:trivial}
\end{align}
The coordinate-specific lengthscales $\ell_j$ control covariance decay along individual input dimensions and provide the automatic relevance determination (ARD) mechanism~\cite{ARDpaper}.
Large lengthscales reduce sensitivity to variation in the corresponding coordinates, allowing the model to learn approximate input irrelevance.

This trivial extension would learn each effort component $h_i$ independently using the complete input vector $\boldsymbol{\xi}$, without explicitly imposing known invariances or distinguishing the roles of state and spatial variables.
In addition, despite the ARD coordinate-dependent scaling, the covariance remains stationary because it depends only on the input difference $\boldsymbol{\xi} - \boldsymbol{\xi}'$, see~\cite[Sec.~2.1]{stationaryKernelPaper}.
Consequently, the same distance-based similarity rule is applied throughout the input domain.
Even though this kernel is universal on compact input domains, meaning that continuous effort functions can be approximated arbitrarily closely by functions from its reproducing kernel Hilbert space (RKHS)~\cite{universalKernel}, the approximation property does not ensure efficient learning from limited data.
A generic ARD-RBF covariance does not directly encode known invariances, recurring structure, or localized transitions between regimes.
For example, observations separated by one period along a spatial coordinate may represent equivalent physical conditions, while their RBF covariance decreases with separation.
In conclusion, the trivial extension $k_{\mathrm{triv}}$ in~\eqref{for:trivial} must therefore infer these relationships from observations, which can be inefficient, especially when training data are sparse or unevenly distributed.

\subsection{Structure-Aware Kernel}
\label{sec:structure_aware_kernel}
The trivial extension in the previous subsection treats all state and spatial coordinates uniformly, although prior knowledge may already specify which state variables are relevant to a particular effort component and provide qualitative information about its spatial dependence.
Under Assumptions~\ref{ass:known_structure}--\ref{ass:heterogeneous_structure}, we exploit these two sources of prior information directly in the covariance design.
In particular, for the $i$-th effort component $h_i$, Assumption~\ref{ass:separable_hamiltonian} implies that $h_i$ depends only on the associated state component $x_i$ and the spatial coordinate~$\bm{z}$.
We therefore remove the irrelevant state coordinates $x_j$, $j\neq i$, from the GP input and separate the remaining state and spatial dependencies through the product kernel
\begin{equation}
    k_i\big((x_i, \bm{z}),(x_i', \bm{z}')\big) = k_{x_i}(x_i,x_i')\,k_z(\bm{z}, \bm{z}'),
    \label{eq:structure_aware_kernel}
\end{equation}
where $k_{x_i}$ models similarity with respect to the relevant state component and $k_z$ models similarity with respect to the spatial coordinate.
The latter can be chosen according to the coarse structural information available from Assumption~\ref{ass:heterogeneous_structure}, without requiring an analytical description of the spatially varying constitutive law.

\begin{prop}
\label{prop:structure_aware_kernel}
Let $k_{x_i}$ and $k_z$ be positive semidefinite kernels on $\mathcal{X}_i\subseteq\mathbb{R}$ and $\mathcal{Z}$, respectively, with corresponding reproducing kernel Hilbert spaces $\mathcal{H}_{x_i}$ and $\mathcal{H}_z$.
Then the covariance function in~\eqref{eq:structure_aware_kernel} is a positive semidefinite kernel on $\mathcal{X}_i\times\mathcal{Z}$ and its associated RKHS is the Hilbert tensor product
\begin{equation}
    \mathcal{H}_i = \mathcal{H}_{x_i}\otimes\mathcal{H}_z.
\end{equation}
In particular, $\mathcal{H}_i$ is the completion of finite sums of the form
\begin{equation}
    f(x_i,\bm{z}) = \sum_{r=1}^{R} \varphi_r(x_i)\psi_r(\bm{z}), \quad \varphi_r\in\mathcal{H}_{x_i}, \psi_r\in\mathcal{H}_z.
    \label{eq:tensor_sum}
\end{equation}
Consequently, the product covariance does not restrict the learned effort to a single separable function of the form $f(x_i)\psi(\bm{z})$.
\end{prop}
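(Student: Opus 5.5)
The plan has three parts. First, positive semidefiniteness follows from the Schur product theorem. Second, the RKHS of the product kernel is identified with the Hilbert tensor product, using the uniqueness of reproducing kernels (Moore--Aronszajn). Third, the concluding remark about non-separable functions is read off from the density of finite sums.

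For positive semidefiniteness, I take arbitrary points $(x_i^{(1)},\bm z^{(1)}),\dots,(x_i^{(N)},\bm z^{(N)})$ and form the Gram matrices $K_x=[k_{x_i}(x_i^{(a)},x_i^{(b)})]_{a,b}$ and $K_z=[k_z(\bm z^{(a)},\bm z^{(b)})]_{a,b}$. Both are positive semidefinite by hypothesis. The Gram matrix of $k_i$ is the Hadamard product $K_x\circ K_z$, which the Schur product theorem makes positive semidefinite. To keep the argument self-contained, I would spell out the short proof: write $K_z=\sum_r \bm v_r\bm v_r^\top$. Then
\[
\bm c^\top (K_x\circ K_z)\bm c=\sum_r (\bm c\circ\bm v_r)^\top K_x(\bm c\circ\bm v_r)\ge 0 .
\]
Symmetry of $k_i$ is immediate from the symmetry of both factors.

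For the RKHS identification, I start from the algebraic tensor product of $\mathcal H_{x_i}$ and $\mathcal H_z$. It carries the inner product
\[
\langle \varphi\otimes\psi,\varphi'\otimes\psi'\rangle=\langle\varphi,\varphi'\rangle_{\mathcal H_{x_i}}\langle\psi,\psi'\rangle_{\mathcal H_z},
\]
and its completion is $\mathcal H_{x_i}\otimes\mathcal H_z$. Each elementary tensor is realized as the function $(x_i,\bm z)\mapsto\varphi(x_i)\psi(\bm z)$. Two things must then be checked.
\begin{enumerate}
\item The section $k_i(\cdot,(x_i,\bm z))$ equals $k_{x_i}(\cdot,x_i)\otimes k_z(\cdot,\bm z)$ and hence lies in the space.
\item The reproducing property holds. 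On elementary tensors it reads
\[
\langle\varphi\otimes\psi,\,k_{x_i}(\cdot,x_i)\otimes k_z(\cdot,\bm z)\rangle=\varphi(x_i)\psi(\bm z),
\]
which follows from the reproducing properties of the two factor spaces. It then extends by linearity to finite sums of the form~\eqref{eq:tensor_sum}, and by continuity to the completion.
\end{enumerate}
Since an RKHS is uniquely determined by its kernel, this gives $\mathcal H_i=\mathcal H_{x_i}\otimes\mathcal H_z$.

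The main obstacle is making the identification of abstract tensors with functions rigorous. The map sending $\sum_r\varphi_r\otimes\psi_r$ to the function $\sum_r\varphi_r(x_i)\psi_r(\bm z)$ must be well defined and injective on the completion, so that a nonzero tensor never becomes the zero function. I would handle this by showing that point evaluation at $(x_i,\bm z)$ is the inner product with $k_{x_i}(\cdot,x_i)\otimes k_z(\cdot,\bm z)$. That inner product is a bounded functional on the completed space, so evaluation is continuous and the map to functions is well defined on the completion. For injectivity, the span of the tensors $k_{x_i}(\cdot,x_i)\otimes k_z(\cdot,\bm z)$ is dense in the completion, because the kernel sections span dense subspaces of each factor. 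Hence an element whose function vanishes everywhere is orthogonal to a dense set and must be zero.

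The final assertion then follows directly. $\mathcal H_i$ contains all finite sums $\sum_{r=1}^R\varphi_r\psi_r$, and these are in general not of rank one. For example, $\varphi_1\psi_1+\varphi_2\psi_2$ with linearly independent $\varphi_1,\varphi_2$ and linearly independent $\psi_1,\psi_2$ is not of the form $f(x_i)\psi(\bm z)$. So the product covariance does not restrict the learned effort to a single separable function.
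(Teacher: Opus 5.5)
Your proposal is correct and follows the same structure as the paper's proof. You use the Schur product theorem on the Hadamard Gram matrix for positive semidefiniteness, identify the RKHS of the product kernel with $\mathcal{H}_{x_i}\otimes\mathcal{H}_z$, and read the final claim off the density of finite sums. The only difference is that the paper cites a known theorem for the tensor-product identification, whereas you prove it yourself (evaluation as the inner product with $k_{x_i}(\cdot,x_i)\otimes k_z(\cdot,\bm{z})$, injectivity via density of kernel sections, then Moore--Aronszajn uniqueness) and add an explicit rank-two example for the last sentence.
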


\begin{proof}
Consider arbitrary points $\{(x_i^{(j)}, \bm{z}^{(j)})\}_{j=1}^{N}$. 
Let $K_x$ and $K_z$ denote the Gram matrices generated by $k_{x_i}$ and $k_z$, respectively.
Since both kernels are positive semidefinite, $K_x\succeq0$ and $K_z \succeq 0$.
The Gram matrix associated with \eqref{eq:structure_aware_kernel} is given by
\begin{equation}
    K_i = K_x\circ K_z,
\end{equation}
where $\circ$ denotes the Hadamard product.
By the Schur product theorem, $K_i\succeq0$, and hence $k_i$ is a valid positive semidefinite kernel.

For product kernels, the corresponding RKHS is the Hilbert tensor product of the RKHSs associated with the individual factors,
\begin{equation}
    \mathcal{H}_{i} = \mathcal{H}_{x_i}\otimes\mathcal{H}_z,
\end{equation}
see~\cite[Theorem~3.4]{productOfKernel} for details.
By definition, this space is obtained as the completion of finite linear combinations of elementary tensor-product functions $\varphi_r(x_i)\psi_r(\bm{z})$, which gives~\eqref{eq:tensor_sum}.
Thus, although the covariance itself factorizes into a state and a spatial kernel, functions represented by its RKHS may contain general finite sums of interactions between state and spatial coordinates and are not restricted to a single separable product.
\end{proof}

\begin{rem}[Architectural modularity]
\label{rem:architectural_modularity}
An ARD-RBF kernel on the reduced input $(x_i,z)$ already admits an algebraic factorization into an RBF kernel on $x_i$ and an RBF kernel on $\bm{z}$, up to the overall output scale.
Hence, writing the covariance explicitly in the product form~\eqref{eq:structure_aware_kernel} does not by itself enlarge the model class when both factors are RBF kernels.
Its benefit is instead \emph{architectural modularity}.
The state covariance $k_{x_i}$ and spatial covariance $k_z$ become independent design components, so that $k_z$ can be replaced by a kernel reflecting known heterogeneous structure without modifying the state-dependent part of the model.
This separation therefore provides a systematic interface for incorporating coarse spatial prior knowledge while leaving the analytical form of the unknown constitutive relation to be learned from data.
\end{rem}

\subsection{Spatial Kernel Choices}
\label{subsec:spatial_kernels}
A compatible spatial kernel can be selected using qualitative structural information inferred from preliminary observations, without requiring a detailed reconstruction of the underlying spatial profile.
For example, a periodic kernel can encode recurring spatial patterns, a Mat\'ern-$5/2$ kernel can represent finitely smooth spatial variation, and a changepoint kernel can blend distinct covariance models across a transition between spatial regimes.
Parameters such as the fundamental period, changepoint location, and transition scale may be initialized from coarse estimates and subsequently optimized during GP training.
The string benchmark in Section~\ref{sec:string_system_and_result} applies these kernel choices to spatially varying stiffness. Furthermore, we observe that the learned changepoint parameters and predictive performance in that benchmark remain insensitive to initialization errors.

\section{Numerical Evaluation} \label{sec:string_system_and_result}
We evaluate the proposed structure-aware kernel design using a nonlinear string with heterogeneous stiffness.
We first describe the physical system and data-generation procedure, followed by model training, trajectory reconstruction, and the numerical results.

\subsection{String Model}
\label{sec:modeling}
We consider a one-dimensional string of length $L=10$, with transverse displacement $u(z,t)$ at position $z \in [0,L]$ and time $t\geq 0$.
Under the unit-density assumption, the momentum density coincides with the transverse velocity, $p(z,t) = \partial u / \partial t$, while the strain is $q(z,t) = \partial u / \partial z$.
With the state ordering $\bm{x} = [q,p]^\top$, we consider the ground-truth Hamiltonian functional given by
\begin{gather}
    \label{eq:string_hamiltonian}
    \mathcal{H}[\bm{x}] = \int_0^L H(z, \bm{x}) \,\mathrm{d}z, \notag\\
    \text{with} \quad H(z, \bm{x}) = \frac{1}{2} p^2 + \frac{1}{2} k(z) q^2 + \frac{1}{2} q^4,
\end{gather}
where $k(z)$ is the spatially varying stiffness and the quartic term introduces nonlinear strain stiffening.
The analytical Hamiltonian generating reference data is \emph{unknown} to the GP models, which learn its effort components independently.
The corresponding effort functions are $h_1 := \delta \mathcal{H} / \delta q = \partial H / \partial q = k(z) q + 2 q^3$ and $h_2 := \delta \mathcal{H} / \delta p = \partial H / \partial p = p$. Thus, the dynamics take the distributed port-Hamiltonian form
\begin{equation}
    \begin{bmatrix} \dot q \\[1mm] \dot p \end{bmatrix} = \begin{bmatrix} 0 & \partial_z \\[1mm] \partial_z & -\gamma \end{bmatrix} \begin{bmatrix} h_1 \\[1mm] h_2 \end{bmatrix} + \begin{bmatrix} 0 \\[1mm] F_{\mathrm{ext}}(z,t) \end{bmatrix},
    \label{eq:string_dynamics}
\end{equation}
where $\gamma = 0.05$ is the internal damping coefficient.
The external input is a localized periodic motor pulse applied near the center of the string with $F_{\mathrm{ext}}(z,t) = \exp[-10 (z - L/2)^2] \sin(2 \pi t)$. The initial conditions are $p(z,0) = 0$ and
\begin{gather}
    q(z,0) = -4 (z - z_c) \exp\!\left[-(z - z_c)^2\right], \; z_c = L/2.
    \label{eq:string_initial_conditions}
\end{gather}
The boundary conditions are given by the homogeneous Dirichlet conditions
$p(0,t) = p(L,t) = 0$ for all $t\geq 0$, while $q$ is left unconstrained at the boundary.

Motivated by the applications discussed in Section~\ref{sec:introduction}, we consider three heterogeneous stiffness profiles: periodic, $C^2$ bumps, and semi-piecewise. Figure~\ref{fig:stiffness_profiles} illustrates the three heterogeneous profiles. The transition in the semi-piecewise profile is $C^2$ smooth rather than strictly discontinuous, so that the spatial derivatives remain well-defined across the interface for finite-difference simulation and trajectory reconstruction. A constant-stiffness experiment provides a spatially homogeneous reference case.

\begin{figure}[t]
    \centering
    \includegraphics[width=0.8\linewidth]{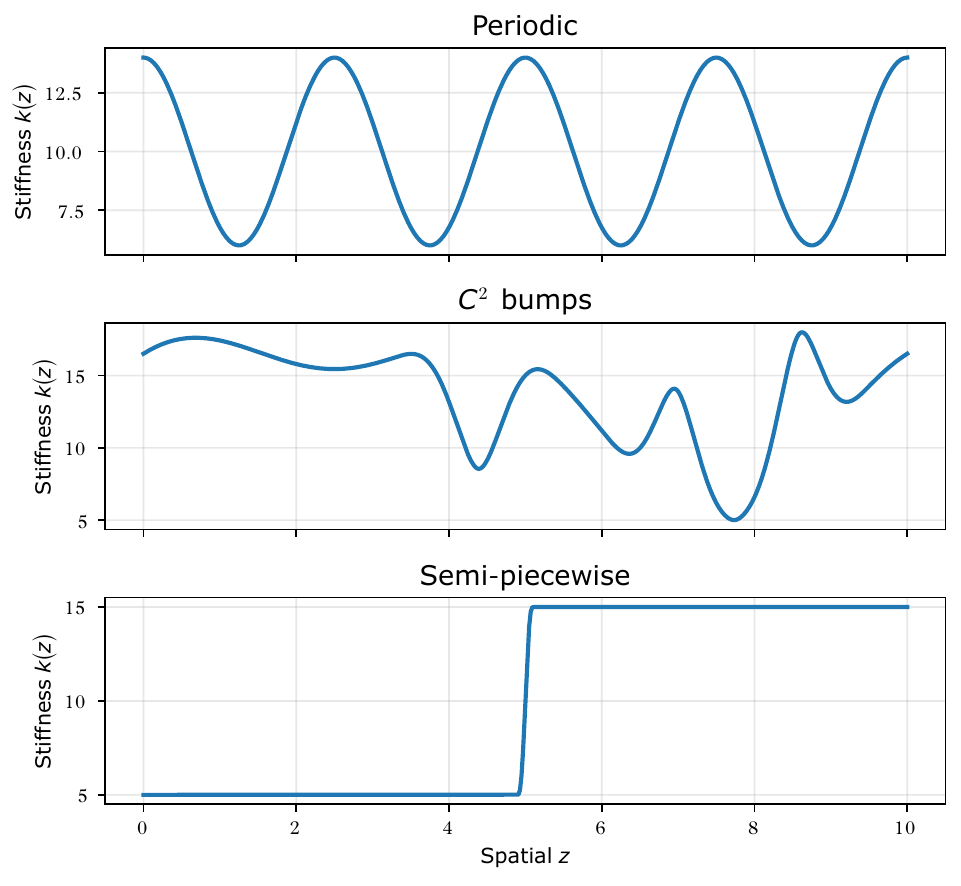}
    \vspace{-0.3cm}\caption{Representative spatial stiffness profiles used in this study: periodic, $C^2$ bumps, and semi-piecewise stiffness (top to bottom, respectively).}
    \vspace{-0.6cm}
    \label{fig:stiffness_profiles}
\end{figure}

\subsection{Simulation and Data Generation}
\label{sec:dataset}
For the generation of training data, the dynamics~\eqref{eq:string_dynamics} are discretized on a uniform spatial grid with $N_z=100$ points.
Let $D_z$ denote the discrete spatial derivative operator on the uniform grid, with $f_i = f(z_i)$.
At interior grid points, it uses the second-order centered-difference approximation
\begin{equation}
    [\partial_z f]_{z_i} \!\approx\! [D_z f]_i := \frac{f_{i+1} - f_{i-1}}{2 \Delta z}, \,i = 2,\ldots,N_z-1.
    \label{eq:string_spatial_difference}
\end{equation}
At the endpoints, $D_z$ uses first-order one-sided differences
$
    [D_z f]_1 := \frac{f_2 - f_1}{\Delta z}$ and $ [D_z f]_{N_z} := \frac{f_{N_z} - f_{N_z-1}}{\Delta z}.
$
The endpoint momentum values are maintained at zero throughout integration, while no boundary values are prescribed for the strain.
The resulting system of ordinary differential equations is integrated using the adaptive Runge--Kutta method RK45, and the numerical solution is recorded at uniformly spaced output times $\Delta t = 0.05\,\mathrm{s}$ for a simulation time of $T = 5.5\,\mathrm{s}$.

Each training sample consists of the augmented input $\boldsymbol{\xi}_{ij} = [p(z_i,t_j), q(z_i,t_j), z_i]^\top$ paired with the corresponding values\footnote{This setup isolates the effect of kernel design from errors associated with reconstructing effort targets from trajectory observations.
In practice, the values of $h_1$ and $h_2$ would be reconstructed from measurements of $\dot{q}$ and $\dot{p}$ as described in~\cite{tan24a}.} of $h_1$ and $h_2$.
From the complete space--time dataset, we select 2\% of the samples for training.
Spatially stratified sampling allocates 25\%, 50\%, and 25\% of the training budget to the left-end region $z \in [0, 2]$, interior region $z \in [2, 8]$, and right-end region $z \in [8, 10]$, respectively.
Within each stratum, samples are selected uniformly at random without replacement.
For each heterogeneous stiffness profile, the same fraction of the available space--time samples is selected.

\subsection{Experimental Design and Model Configurations}
\label{subsec:model_configurations}
The experiments distinguish two contributions of the proposed methodology: imposing known state invariances and incorporating qualitative information about the spatial stiffness profile.
We use two independently trained scalar-output GP models, $\hat h_1 \approx \delta \mathcal{H} / \delta q$ and $\hat h_2 \approx \delta \mathcal{H} / \delta p$, for the constitutive and momentum efforts, respectively.
For each stiffness profile and sampling seed, all three configurations use the same training observations.

The Hamiltonian in \eqref{eq:string_hamiltonian} implies that $h_1$ is independent of momentum and $h_2$ is independent of strain.
In the main comparisons, only the constitutive-effort model $\hat h_1$ is modified.
The momentum-effort model $\hat h_2$ retains the baseline formulation throughout, providing a controlled comparison and reflecting the comparatively simple relation $h_2 = p$.
We consider the following settings:

\emph{Baseline:}
Both $\hat h_1$ and $\hat h_2$ use the complete input $(p,q,z)$ and an ARD-RBF covariance.
This configuration does not explicitly impose either state invariance and follows the trivial extension as presented in Section~\ref{subsec:baseline_model}.

\emph{SD-RBF:}
The constitutive-effort model $\hat h_1$ uses the reduced input $(q,z)$ and the product covariance $k_q^{\mathrm{RBF}}(q,q') k_z^{\mathrm{RBF}}(z,z')$.
Removing $p$ imposes the known momentum independence of $h_1$, while retaining its dependence on strain and spatial position.
As discussed in Section~\ref{sec:structure_aware_kernel}, this product is equivalent to an ARD-RBF covariance on $(q,z)$.
The comparison with the baseline therefore isolates the effect of excluding the irrelevant momentum input.

\emph{SD-matched (proposed method):}
The effort model $\hat h_1$ retains the input $(q,z)$ and the RBF strain factor, but uses the covariance $k_q^{\mathrm{RBF}}(q,q') k_z^{\mathrm{matched}}(z,z')$.
Comparison with SD-RBF therefore isolates the additional effect of selecting the spatial covariance according to the qualitative stiffness structure.
SD-matched denotes SD-Per.\ with a periodic spatial kernel for periodic stiffness, SD-Mat.\ with a Mat\'ern-$5/2$ spatial kernel for the $C^2$-bump profile, and SD-CP with a changepoint spatial kernel for the semi-piecewise profile.

The changepoint kernel blends two regional covariance functions using a smooth gate
$
    k_z^{\mathrm{CP}}(z,z') = \bar s(z) \bar s(z') k_{\mathrm{L}}(z,z') + s(z) s(z') k_{\mathrm{R}}(z,z'), 
$
where $s(z) = [1 + \exp(-(z - z_0)/w)]^{-1}$ and $\bar s(z) = 1 - s(z)$.
The kernels $k_{\mathrm{L}}$ and $k_{\mathrm{R}}$ are separately parameterized RBF covariances for the two spatial regimes.
The trainable parameters $z_0$ and $w > 0$ specify the transition location and scale in the standardized spatial coordinate.
This construction represents a smooth transition between distinct covariance structures without prescribing the stiffness function itself.
\begin{rem}
    To satisfy Assumption~\ref{ass:separable_hamiltonian}, the momentum-effort model $\hat{h}_2$ should depend only on $(p,z)$, excluding $q$. However, our numerical experiments indicate that removing $q$ from $\hat{h}_2$ has no noticeable effect on performance.
\end{rem}

\subsection{Training and Reconstruction}
\label{subsec:training}
Both effort models are implemented as exact GPs and trained in full-batch mode by minimizing the negative exact marginal log likelihood with Adam.
Kernel hyperparameters use the default GPyTorch initialization unless otherwise specified.
Training uses double precision and a maximum of $1000$ iterations.
Early stopping is applied, and the checkpoint attaining the lowest training loss is restored.
Table~\ref{tab:training_hyperparameters} lists the learning rate $\eta$, early-stopping patience $P$, and minimum-improvement threshold $\Delta_{\min}$.
The settings are shared across kernel architectures within each stiffness profile.
The constitutive-effort models for the $C^2$-bump and semi-piecewise profiles use smaller learning rates, longer patience, and smaller improvement thresholds to accommodate their convergence behavior.
The momentum-effort model uses the same settings across all profiles.

\begin{table}[b]
    \centering
    \caption{Optimization settings for the effort models, shared across kernel architectures.}
    \label{tab:training_hyperparameters}
    \begin{tabular}{lccc}
        \toprule
        Model and profile & {$\eta$} & {$P$} & {$\Delta_{\min}$} \\
        \midrule
        $\hat h_2$, all profiles & \num{1e-1} & $15$ & \num{1e-3} \\
        $\hat h_1$, periodic and constant & \num{1e-1} & $15$ & \num{1e-3} \\
        $\hat h_1$, $C^2$ bumps & \num{2e-2} & $50$ & \num{1e-4} \\
        $\hat h_1$, semi-piecewise & \num{1e-2} & $50$ & \num{1e-4} \\
        \bottomrule
    \end{tabular}
\end{table}

\begin{table*}[b]
    \centering
    \caption{Average numerical metrics for the considered stiffness profiles and kernel configurations.
    SD-RBF, SD-Per., SD-Mat., and SD-CP denote structural decoupling with RBF, periodic, Mat\'ern, and changepoint spatial kernels, respectively.}
    \label{tab:quantitative_results}
    \setlength{\tabcolsep}{5.0pt}
    \renewcommand{\arraystretch}{1.15}
    \begin{tabular}{ll SSS SSS}
        \toprule
        Stiffness & Kernel & {$\varepsilon_q$} & {$\varepsilon_u$} & {$\varepsilon_E$} & {Curl RMSE} & {Eff. RMSE} & {Force RMSE} \\
        \midrule
        Constant & RBF & \num{4.69e-2} & \num{1.23e-2} & \num{1.15e-3} & \num{2.63e-3} & \num{4.77e-2} & \num{1.86e-1} \\
        \midrule
        Periodic & RBF & \num{5.60} & \num{1.18} & \num{1.11e4} & \num{8.19e-1} & \num{1.76} & \num{8.82} \\
        Periodic & SD-RBF & \num{1.73} & \num{6.65e-1} & \num{5.01e1} & \num{3.56e-3} & \num{1.50} & \num{6.30} \\
        \textbf{Periodic} & \textbf{SD-Per.} & \textbf{\num{8.14e-2}} & \textbf{\num{2.06e-2}} & \textbf{\num{3.52e-3}} & \textbf{\num{3.56e-3}} & \textbf{\num{6.03e-2}} & \textbf{\num{3.08e-1}} \\
        \midrule
        $C^2$ bumps & RBF & \num{5.47} & \num{1.18} & \num{1.64e3} & \num{4.75e-1} & \num{1.45} & \num{6.97} \\
        $C^2$ bumps & SD-RBF & \num{1.27} & \num{7.13e-1} & \num{5.10} & \num{3.72e-3} & \num{1.23} & \num{5.97} \\
        \textbf{$C^2$ bumps} & \textbf{SD-Mat.} & \textbf{\num{3.53e-1}} & \textbf{\num{1.03e-1}} & \textbf{\num{2.23e-2}} & \textbf{\num{3.72e-3}} & \textbf{\num{3.09e-1}} & \textbf{\num{1.45}} \\
        \midrule
        Semi-piecewise & RBF & \num{2.38} & \num{1.01} & \num{2.33e2} & \num{1.93e-1} & \num{1.17} & \num{5.31} \\
        Semi-piecewise & SD-RBF & \num{2.24} & \num{8.37e-1} & \num{1.65e2} & \num{3.58e-3} & \num{1.01} & \num{4.04} \\
        \textbf{Semi-piecewise} & \textbf{SD-CP} & \textbf{\num{3.24e-1}} & \textbf{\num{6.90e-2}} & \textbf{\num{1.26e-2}} & \textbf{\num{3.58e-3}} & \textbf{\num{2.92e-1}} & \textbf{\num{1.06}} \\
        \bottomrule
    \end{tabular}
\end{table*}

During evaluation, the trained GPs predict the effort functions at the current integrated state.
The baseline models use $\hat{h}_1(p,q,z)$ and $\hat{h}_2(p,q,z)$.
Structural decoupling uses $\hat{h}_1(q,z)$ and $\hat{h}_2(p,q,z)$. After transformation back into physical units, the predicted efforts are substituted into the discretized port-Hamiltonian dynamics at interior grid points
\begin{equation}
    \dot q_i = [D_z \hat{h}_2]_i, \qquad \dot p_i = [D_z \hat{h}_1]_i - \gamma \hat{h}_{2,i} + F_{\mathrm{ext}}(z_i,t).
    \label{eq:string_gp_rollout}
\end{equation}
Here, $D_z$ is the discrete spatial derivative operator defined in Section~\ref{sec:dataset}, using the same interior and endpoint stencils as the reference simulation.
The reconstruction enforces $p_{\mathrm{GP}}(0,t) = p_{\mathrm{GP}}(L,t) = 0$. The resulting system of ODEs is integrated using the same RK45 method as the reference simulation, producing $p_{\mathrm{GP}}(z,t)$ and $q_{\mathrm{GP}}(z,t)$.
Each simulation uses the same initial conditions, boundary conditions, external forcing, and evaluation grid as the reference trajectory from which training observations are sampled.
This evaluates trajectory reconstruction from sparse observations of that trajectory rather than generalization to new initial conditions or forcing.
For both reference and predicted trajectories, displacement is reconstructed from strain by numerical integration over the spatial domain $\mathcal{Z}$.

\subsection{Results}
\label{subsec:quantitative_results}
Table~\ref{tab:quantitative_results} summarizes the numerical results.  The relative errors $\varepsilon_q$, $\varepsilon_u$, and $\varepsilon_E$ quantify the overall reconstruction accuracy for strain, displacement, and physical energy, respectively, over the complete trajectory, while the curl RMSE measures violations of the mixed-partial integrability condition between the learned effort functions. The effort RMSE evaluates the local accuracy of the learned constitutive effort $\hat{h}_1$, whereas the force RMSE measures the error after applying the spatial derivative $D_z\hat{h}_1$, which directly affects the reconstructed dynamics. 
As an additional baseline, we consider a constant-stiffness case representing a spatially homogeneous system. As expected, the GP-dPHS model with standard RBF kernel performs well in this setting; however, its performance deteriorates substantially for heterogeneous profiles.

Comparing RBF with SD-RBF examines the effect of removing $p$ from the input to $\hat{h}_1$ while retaining the RBF kernel family. SD-RBF reduces strain, displacement, and energy errors for all three heterogeneous profiles, while also improving effort and force RMSE.
The displacement error decreases by approximately $43.6\%$, $39.6\%$, and $17.1\%$ for the periodic, $C^2$-bump, and semi-piecewise profiles, respectively.
The trajectory reconstruction improvements are therefore more pronounced for the periodic and $C^2$-bump cases, with smaller gains for the semi-piecewise profile.
Despite these improvements, substantial reconstruction errors remain, indicating that removing the irrelevant input alone does not adequately capture the heterogeneous structure.

Structural decoupling also reduces the curl RMSE by one to two orders of magnitude for the stiffness profiles considered in this work.
Since $\hat{h}_2$ is independent of $p$, the cross-derivative $\partial \hat{h}_1 / \partial p$ vanishes identically.
The remaining term $\partial \hat{h}_2 / \partial q$ is not constrained to vanish, so these configurations reduce the integrability residual without enforcing exact integrability.
Within each profile, SD-RBF and SD-matched have the same reported curl RMSE because they share the momentum-effort model and both exclude $p$ from the constitutive-effort model.

Using prior information on the stiffness profile as we propose with the structure-aware kernel (\ref{eq:structure_aware_kernel}) produces substantially greater improvements.
SD-Per., SD-Mat., and SD-CP achieve displacement errors of $2.06 \times 10^{-2}$, $1.03 \times 10^{-1}$, and $6.90 \times 10^{-2}$, respectively.
These values correspond to reductions of approximately $98.3\%$, $91.3\%$, and $93.2\%$ relative to the baseline, and $96.9\%$, $85.6\%$, and $91.8\%$ relative to SD-RBF.
The matched models also attain the lowest strain and energy errors and the lowest effort and force RMSE among the tested configurations for each heterogeneous profile.
Their additional reconstruction improvements occur without a further reduction in curl RMSE, highlighting the contribution of the spatial covariance beyond the imposed momentum invariance.
Examples of the predicted trajectories are shown in Figure~\ref{fig:rollout_grid}, where our structure-aware design (SD-matched) most closely reproduces the actual trajectory.
\begin{figure*}[h]
    \centering
    \includegraphics[width=0.99\textwidth]{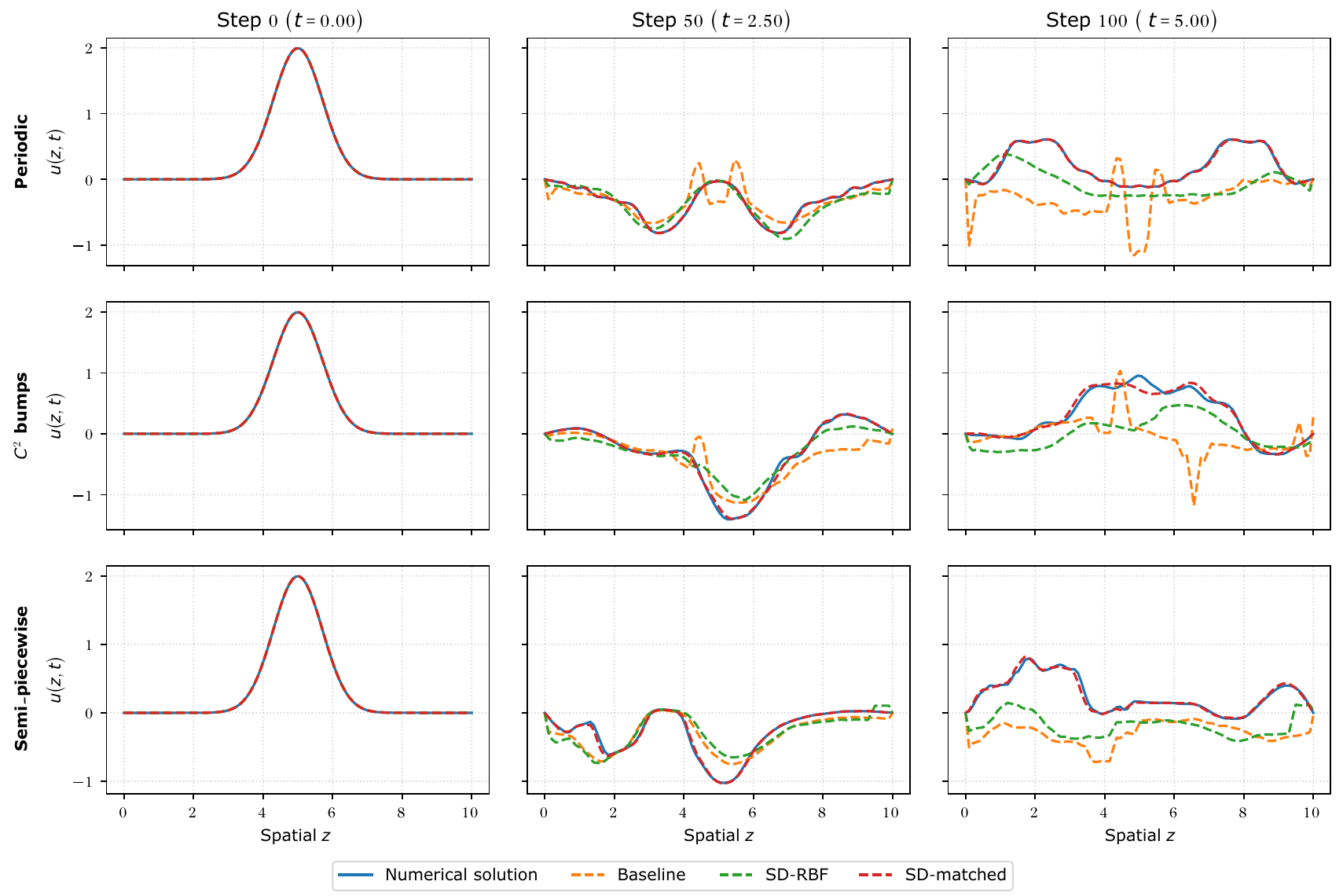}
    \vspace{-0.2cm}\caption{Average reconstructed displacement trajectories for the periodic, $C^2$-bump, and semi-piecewise stiffness profiles.
    Columns correspond to rollout steps 0, 50, and 100.
    Curves compare the numerical reference, stationary RBF baseline, SD-RBF, and profile-matched structural-decoupling model.
    The matched models use periodic, Mat\'ern, and changepoint spatial kernels for the three profiles, respectively.}\vspace{-0.5cm}
    \label{fig:rollout_grid}
\end{figure*}

These experiments demonstrate the benefit of our approach: exploiting known input invariances provides consistent improvements over the baseline, while incorporating profile-matched spatial priors yields further substantial gains. In particular, the best performance is achieved by combining structural decoupling with a spatial covariance adapted to the stiffness profile.

\section{Conclusion} \label{sec:conclusion}
\vspace{-0.1cm}
This work introduced a structure-aware kernel design framework for GP-dPHS of spatially heterogeneous systems, combining known input invariances with a modular separation of state and spatial dependencies.
This enables coarse knowledge of spatial heterogeneity to be incorporated directly into the covariance without requiring an analytical description of the underlying profile.
Numerical experiments demonstrate that structural decoupling consistently improves upon the standard baseline, while substantially larger gains are achieved when the spatial covariance is matched to the qualitative stiffness structure.
These results highlight structure-aware kernel design as an effective mechanism for exploiting spatial prior knowledge in GP-dPHS, with extensions to higher-dimensional and experimental systems providing promising directions for future work.

\bibliographystyle{IEEEtran}
\bibliography{IEEEabrv,refs}


\end{document}